\newtheorem{theorem}{Theorem}[section]
\newtheorem{proposition}[theorem]{Proposition}
\newcommand{\method}{\textsc{CoDAT}\xspace}
\definecolor{myOran}{RGB}{255,163,76}
\definecolor{myBlue}{RGB}{9, 163, 241}
\title{Optimizing Non-Autoregressive Transformers with Contrastive Learning}
\author{Chenxin An$^{1,2}$, Jiangtao Feng, Fei Huang$^{3}$, Xipeng Qiu$^{2}$, Lingpeng Kong$^{1}$ \\
  $^1$The University of Hong Kong \\
  $^2$Fudan University \\
  $^3$The CoAI group, Tsinghua University \\
  \texttt{\{cxan20, xpqiu\}@fudan.edu.cn}
  \texttt{jiangtaofeng0906@gmail.com} \\ \texttt{ huangfei382@163.com}, \texttt{lpk@cs.hku.hk}\\
 }
\begin{document}

\maketitle
\begin{abstract}
Non-autoregressive Transformers (NATs) reduce the inference latency of Autoregressive Transformers (ATs) by predicting words all at once rather than in sequential order. They have achieved remarkable progress in machine translation as well as many other applications.
However, a long-standing challenge for NATs is the learning of multi-modality data distribution, which is the main cause of the performance gap between NATs and ATs.
In this paper, we propose to ease the difficulty of modality learning via sampling from the model distribution instead of the data distribution.
We derive contrastive constraints to stabilize the training process and integrate this resulting objective with the state-of-the-art NAT architecture DA-Transformer~\citep{huang2022directed}.
Our model \method is examined on 3 different tasks, including machine translation, text summarization, and paraphrasing with 5 benchmarks. Results show that our approach outperforms previous non-autoregressive baselines by a significant margin and establishes new state-of-the-art results for non-autoregressive transformers on all the benchmarks.
\end{abstract}

\section{Introduction}
Autoregressive Transformers (ATs) have become the dominant architecture for text generation, but token-by-token decoding usually leads to inefficiency in inference stage. Non-autoregressive Transformers (NATs)~\cite{nat2018gu, gu2018nonautoregressive, axe2020ghazvininejad, huang2022directed} significantly reduce the decoding latency by removing the dependency between target tokens and generating the whole sequence in parallel. 

Despite the fast decoding speed,  the main challenge of NATs lies in the learning of the ground-truth data distribution, which often has a large number of modalities \citep{gu2018nonautoregressive}.\footnote{We use the word \textit{multi-modality} in the context of machine translation literature, where it refers to to the phenomenon that there are multiple acceptable translations in the ground-truth data distribution and all of them are used as training labels for the same input.}
ATs mitigate this problem by treating sequence generation as a gradual modality collapsing process~\cite{gu2018nonautoregressive, qian2021glancing}. As the generation of later tokens is conditioned on the previous ones, it is unlikely for the model to flip around different modalities. 
 NATs, on the other hand, generate all the tokens all at once, hence prone to generate tokens from mixed modalities in one sequence, which strongly hurts their performance.

A common fix for this issue is to directly reduce the number of modalities of original data distribution by
knowledge distillation through an autoregressive model~\citep{gu2018nonautoregressive}. Intuitively, this step regenerates the training set using an autoregressive model learned from the original data distribution, making it more manageable for the NATs but also introducing a redundant pipeline.
Recent efforts start to scrutinize NATs' conditional independence assumption and address the multi-modality challenge mainly by learning target alignment~\citep{ctc2018libovicky, axe2020ghazvininejad, oaxe2021du, huang2022directed} and enhancing input~\citep{qian2021glancing, ghazvininejad2019mask}.

In this work, we propose to tackle this problem from a new perspective, with the goal of bypassing the modalities that are difficult for the model to capture in the learning procedure. Our method starts from  the reverse Kullback-Leibler (KL) divergence optimization, which effectively converges to a local single modality given a complex multi-modal target distribution~(\citealt{bishop2006pattern}; Figure~\ref{fig:tsne-exp}). 
The connection between modality learning and model distribution optimization is further explained in~\S\ref{sec:method-start}. To stabilize the training and prevent collapsing, we derive a set of constraints from the data distribution and impose them to the model distribution. We find that in theory leads to a contrastive learning objective (\S\ref{sec:cl-obj}). 
Finally, we show how to integrate this objective with the state-of-the-art NAT architecture DA-Transformer~\cite{huang2022directed} (\S\ref{sec:impl}).

We test the performance of our model, \method, on three generation tasks with five benchmarks. Experiments on two major WMT machine translation benchmarks demonstrate that our approach substantially improves the performance over strong baseline DA-Transformer and establishes new state-of-the-art results directly trained on the raw datasets~(\S\ref{sec:mt}). Similar to machine translation, we also achieve the best result for NAT on paraphrasing (\S\ref{sec:paraphrase}), and impressive results on non-autoregressive summarization (\S\ref{sec:summ}). \method exceeds the autoregressive model on two widely-used summarization benchmarks  XSum~\cite{narayan2018don} and Gigaword~\cite{rush2015neural}.

\begin{figure}
\centering
\begin{subfigure}{.30\textwidth}
    \begin{center}
        \includegraphics[width=\textwidth]{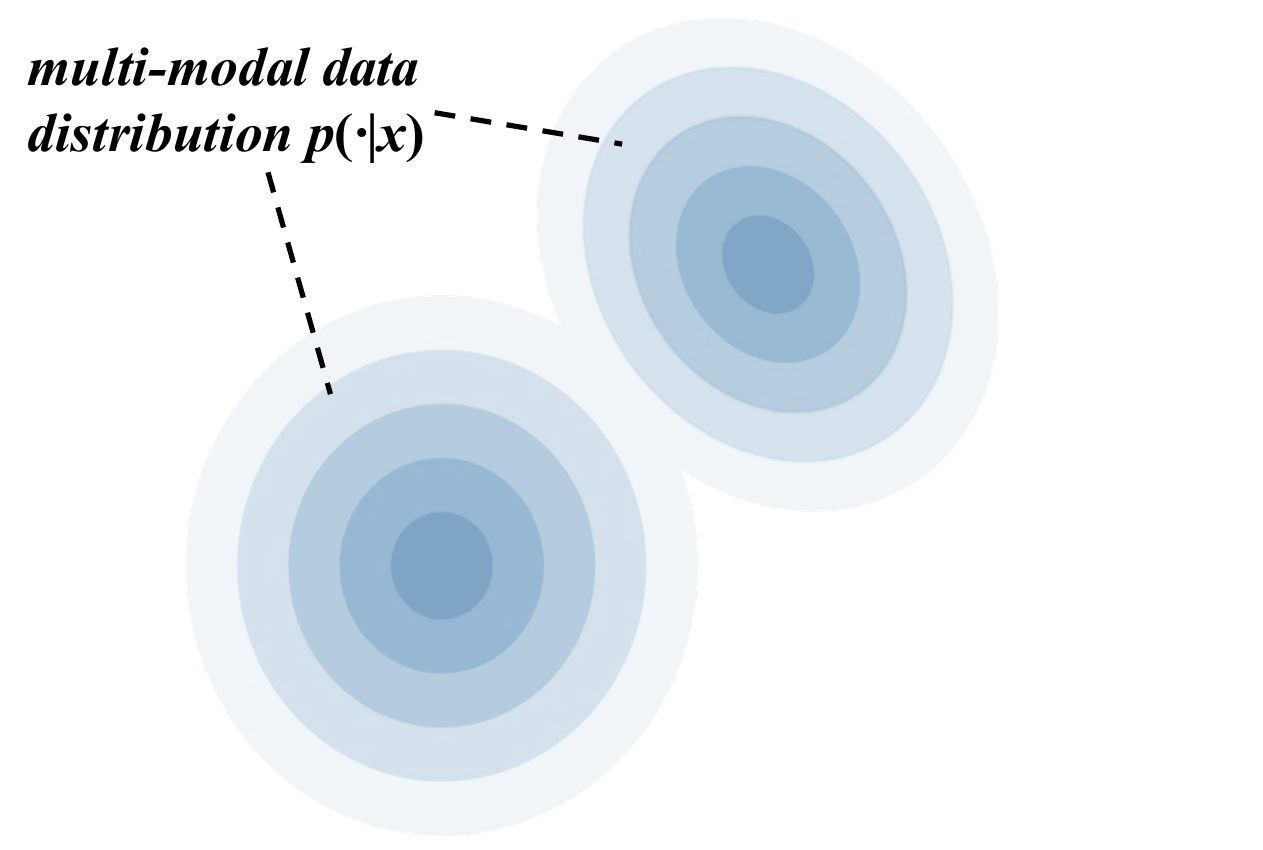}
        \vspace{-1em}
        \caption{}
        \label{fig:multimodality}
    \end{center}
\end{subfigure}
\hspace{-1mm}
\begin{subfigure}{.31\textwidth}
  \begin{center}
    \includegraphics[width=\textwidth]{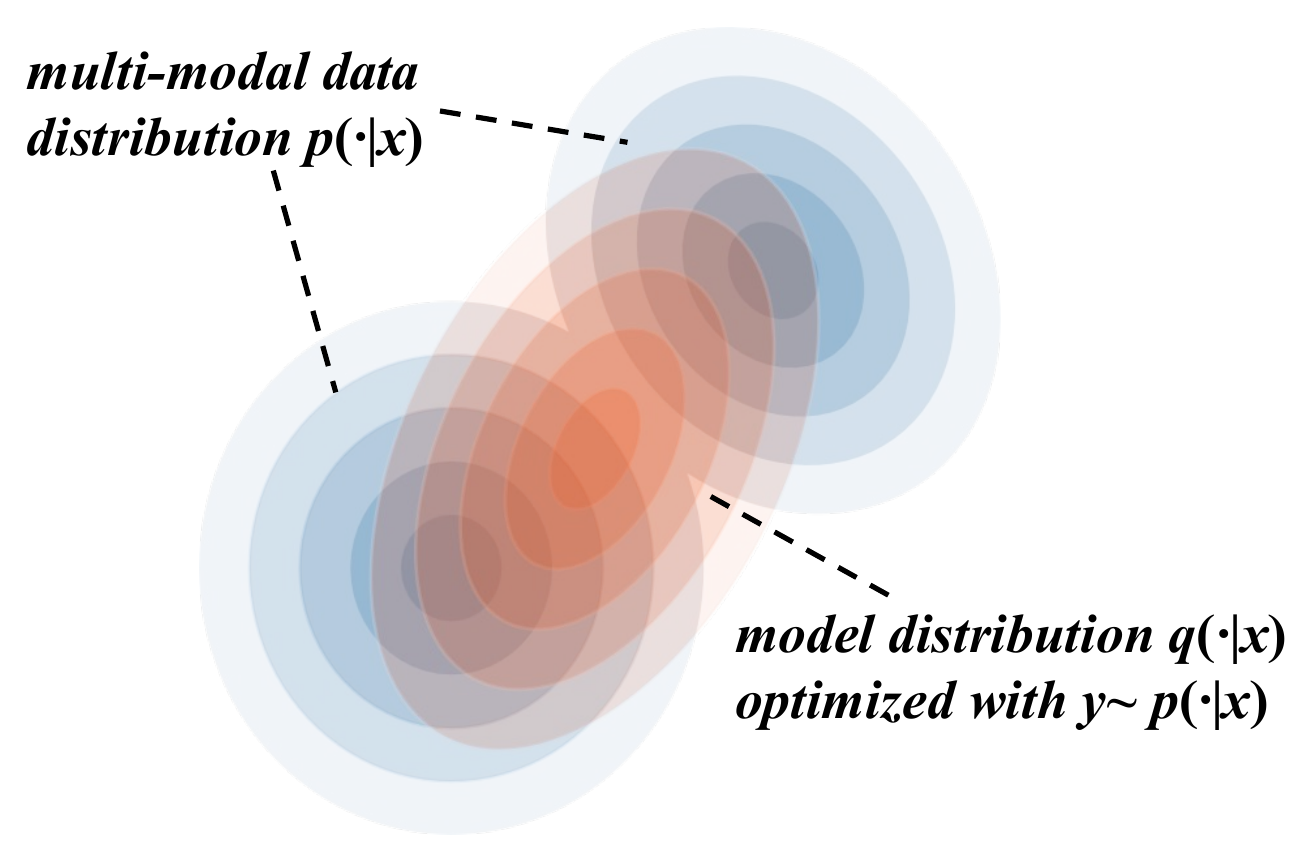}
    \vspace{-1em}
    \caption{}
    \label{fig:forward-kl}
  \end{center}
\end{subfigure}
\hspace{1mm}
\begin{subfigure}{.31\textwidth}
  \begin{center}
    \includegraphics[width=\textwidth]{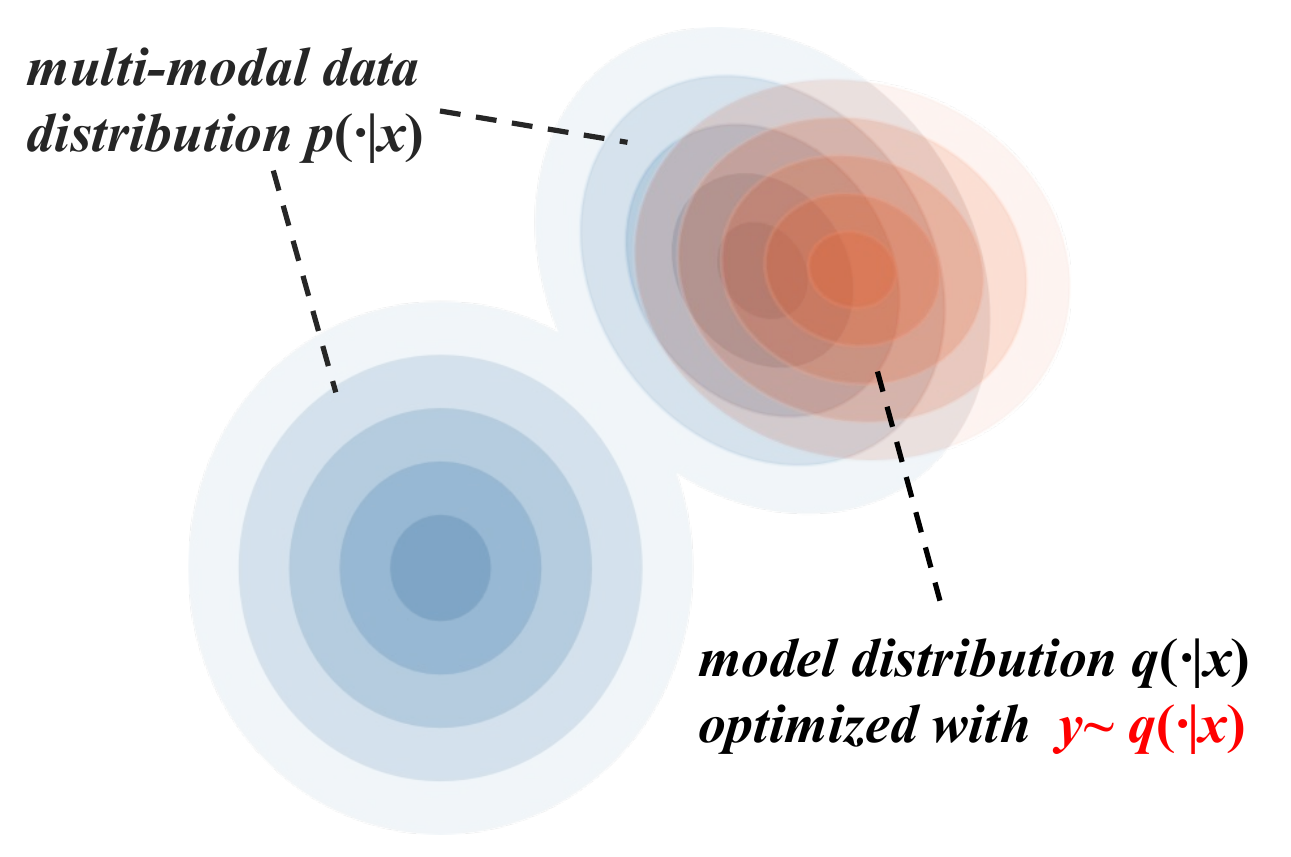}
    \vspace{-1em}
    \caption{}
    \label{fig:reverse-kl}
  \end{center}
\end{subfigure}
\caption{(a) shows a bimodal data distribution (blue contours). Orange contours in (b) show a single modal model distribution that is optimized to fit the data distribution by samples from the bimodal data distribution.  similar to (b), but the model distribution is optimized to a different local minimum with samples from the captured modality of model distribution. }
\label{fig:tsne-exp}
\end{figure}
\section{Background}
\paragraph{Non-autoregressive Generation} Consider predicting a target sequence $\boldsymbol{y}=\{y_1,y_2,\ldots,y_n\}$ with a source sequence $\boldsymbol{x}=\{x_1,x_2,\ldots,x_m\}$, where $n$ and $m$ are target and source sequence length respectively.
Typical autoregressive transformers model the conditional probability $p(\boldsymbol{y}|\boldsymbol{x})$ via autoregressive decomposition as:
\begin{equation}
    p(\boldsymbol{y}|\boldsymbol{x}) = \prod_{i=1}^n p(y_i|y_{<i}, \boldsymbol{x}).
\end{equation}
While non-autoregressive transformers factorize $p(\boldsymbol{y}|\boldsymbol{x})$ independently with \textit{conditional independence assumption}:
\begin{equation}
    p(\boldsymbol{y}|\boldsymbol{x}) = \prod_{i=1}^n p(y_i|\boldsymbol{x}).
\end{equation}
With this assumption, NATs are able to drop the left-to-right dependencies and decode the target sequence in parallel.

 \paragraph{DP Training for NATs}\label{sec:dp}
The strict position alignment between predicted and target tokens~\cite{nat2018gu, axe2020ghazvininejad} of vanilla NAT has a poor ability to capture the multi-modality data distribution  which typically results in generated tokens from mixed modality and repeated words~\cite{gu2018nonautoregressive}. Dynamic Programming (DP) training~\cite{ctc2018libovicky, axe2020ghazvininejad, oaxe2021du} greatly alleviate this problem by introducing a long decoder length and alignment-based training objectives that marginalize all the possible alignments that yield the ground-truth. Take the latest work DAT (DA-Transformer)~\cite{huang2022directed} as an example, given the ground truth sequence $\boldsymbol{y} = \{y_1, y_2, \ldots, y_n\}$ whose length is $n$ and decoder length $L \gg n$, $ \mathrm{log} \, q(\boldsymbol{y}|\boldsymbol{x})$ can be defined as:
\begin{equation}
\label{eq:dp-loss}
\begin{split}
    \mathrm{log} \, q(\boldsymbol{y}|\boldsymbol{x}) &=\mathrm{log}\, \sum_{\mathbf{a} \in \Gamma } q(\boldsymbol{y}, \mathbf{a}|\boldsymbol{x})=\mathrm{log}\, \sum_{\mathbf{a} \in \Gamma } q(\boldsymbol{y}|\mathbf{a}, \boldsymbol{x})\cdot q(\mathbf{a} | \boldsymbol{x}),\\
    q(\mathbf{a} | \boldsymbol{x}) &= \prod_{i=1}^{n-1} \mathbf{E}_{a_i. a_{i+1}},\\
    q(\boldsymbol{y}|\mathbf{a}, \boldsymbol{x}) &= \prod_{i=1}^n \mathrm{softmax}(\mathbf{W}_{\text{vocab}}\mathbf{h}_{a_i}),
\end{split}
\end{equation}
where $\mathbf{a}$ is a set of decoder position indexes sorted in ascending order whose size $|\mathbf{a}|=n$ and  $\Gamma$ contains all possible $\mathbf{a}$ with a size of $\binom{L}{n}$. For example,  target length $n=3$ and $L=6$ means $\Gamma$ contains $\binom{6}{3}=20$ possible $\mathbf{a}$, and $\mathbf{a} \in \{0,1,2\}, \{0,1,3\} ... \{3,4,5\}$.  $\mathbf{h}_{a_i}$ means the $a_i$-th decoder hidden state. $q(\boldsymbol{y}|\mathbf{a}, \boldsymbol{x})$ is the token prediction probablity in generation models. $q(\mathbf{a} | \boldsymbol{x})$ is given by the transition matrix $ \mathbf{E} \in \mathbb{R}^{L\times L}$ modeling the first-order dependency between decoder position indexes where $\mathbf{E}_{a_i. a_{i+1}}$ means the transition probability of index $a_i$ to index $a_j$ and it is predicted based on the decoder hidden states. Enumerating all $\mathbf{a} \in \Gamma$ will result in the curse of combination, but luckily $\mathbf{E}$ can be trained with the dynamic programming algorithm. Details of the algorithm can be found in \citet{huang2022directed}.  In Figure~\ref{fig:dp_vanilla} Appendix, we present an example that highlights 1) the difference between a Dynamic Programming (DP) model and a Vanilla model; 2) the calculation of $q(\mathbf{a} | \boldsymbol{x})$ and  $q(\boldsymbol{y}|\mathbf{a}, \boldsymbol{x})$. For the decoding procedure, please refer to Appendix~\ref{sec:codat}. Training with Dynamic Programming is essential for NATs on raw data; otherwise, their performance lags notably behind that of ATs.

\section{Method}

\subsection{Learning on Model Distribution}\label{sec:method-start}

Unlike previous efforts that mitigate multi-modality problems via architecture design or decoding algorithm, we attribute the multi-modality problem to the learning objective, maximum likelihood estimation (MLE). MLE minimizes the KL divergence between the data distribution $p(\cdot|\boldsymbol{x})$ and the model distribution $q(\cdot|\boldsymbol{x})$, which can be formulated as follows,
\begin{equation}
\label{eq:forward-kl}
\begin{split}
    \text{KL}(p \parallel q)&=\mathbb{E}_{\boldsymbol{y}\sim p(\cdot|\boldsymbol{x})}[\log \frac{p(\boldsymbol{y}|\boldsymbol{x})}{q(\boldsymbol{y}|\boldsymbol{x})}] \\
    &=-H(p)+\mathbb{E}_{\boldsymbol{y}\sim p(\cdot|\boldsymbol{x})}[- \log q(\boldsymbol{y}|\boldsymbol{x})]
\end{split}
\end{equation}
where $H(p)$ denotes the entropy of data distribution and is a constant.
Note that MLE requires sampling from the data distribution $p(\cdot|\boldsymbol{x})$, which forces the NAT model to assign a reasonable probability for each sampled $\boldsymbol{y}$. We argue that this objective could be too difficult for NAT models because of its limited capacity, which may result in low-quality predictions with mixed modalities.

In this work, we suggest that the above problem can be alleviated by replacing the samples from $p(\cdot|\boldsymbol{x})$ with the samples from $q(\cdot|\boldsymbol{x})$, therefore dropping some modalities that are too difficult for NATs to capture. Specifically, we propose a generalized divergence $D(q \parallel p)$ as our objective:
\begin{equation}
\label{eq:reverse-kl}
\begin{split}
D(q \parallel p)&=\mathbb{E}_{\boldsymbol{y}'\sim q(\cdot|\boldsymbol{x})}[M_{p,q}(\boldsymbol{y}'|\boldsymbol{x})],
\end{split}
\end{equation}
where $M_{p,q}(\boldsymbol{y}'|\boldsymbol{x})$ measures the discrepancy between model and data distribution with a given sample $\boldsymbol{y}'$ from the simpler model distribution.

To better explain our intuition, we illustrate an example in Figure~\ref{fig:tsne-exp}. In previous objective, $\mathbb{E}_{y\sim p(\cdot|\boldsymbol{x})}[\cdot]$ optimizes samples from all modalities in data distribution $p(\cdot|\boldsymbol{x})$ by assigning them importance rates $p(\boldsymbol{y}|\boldsymbol{x})$, thereby leading to a mixed-modal distribution~(see Figure~\ref{fig:forward-kl});
in contrast, $\mathbb{E}_{y'\sim q(\cdot|\boldsymbol{x})}[\cdot]$ focuses on samples within a captured modality from modeling distribution $q(\cdot|\boldsymbol{x})$ by assigning low importance rate to uncaptured modalities and keeps on optimizing the captured modalities~(see Figure~\ref{fig:reverse-kl}).  The final training objective is the sum of $\text{KL}(p \parallel q)$ and $\text{D}(q \parallel p)$. $\text{KL}(p \parallel q)$ acts as a regularization to prevent collapse if the samples from model distribution are low-quality.

Our idea is connected to previous work that optimizes reverse KL based on reinforcement learning~\footnote{Detailed discussion on reverse KL and reinforcement learning are presented in Appendix~\ref{app:rkl-rl}.}~\citep{wu2016google, bahdanau2017an}. Note that the reverse $\text{KL}(q \parallel p)$ is a special case of Eq.~\ref{eq:reverse-kl}, which adopts the discrepancy measure as $\log \frac{q(\boldsymbol{y}'|\boldsymbol{x})}{p(\boldsymbol{y}'|\boldsymbol{x})}$. However, reinforcement learning  necessitates a well-structured parameter space for initialization~\cite{choshen2019weaknesses}, but in our preliminary experiments, the original KL loss optimized by DP training is easily perturbed by the RL loss that it ultimately impedes performance improvements (see Figure~\ref{fig:rl_dp}).  In the next section, we bypass the difficulty of directly optimizing the absolute reward distribution by suggesting more flexible necessary conditions and optimizing the proposed objective  contrastively.

\subsection{A Contrastive Learning Objective}\label{sec:cl-obj}
Having the connection between NATs' multi-modality challenge and model distribution optimization in the previous discussion, we focus on how to concretize a discrepancy measure $M_{p,q}(\boldsymbol{y}'|\boldsymbol{x})$, in a contrastive way, to NAT models. The general methodology is investigating the constraints that are sufficient and necessary to data distribution and then imposing these constraints to model distribution to guide the learning process.

With $\boldsymbol{y}'$ sampled from model distribution, it is usually unobserved in the enormous target space, with high probability, in a dataset generated by an unknown distribution.
Thus it is untractable to quantify the likelihood $p(\boldsymbol{y}'|\boldsymbol{x})$.
A practical estimation is to introduce a reward-based distribution $p^R(\cdot|\boldsymbol{x},\boldsymbol{y})$, i.e.,
\begin{equation}
\label{eq:pr}
p^R(\boldsymbol{y}'|\boldsymbol{x},\boldsymbol{y})=\frac{1}{Z^R}\exp (R(\boldsymbol{y},\boldsymbol{y}')), (\boldsymbol{x},\boldsymbol{y})\sim \mathcal{D},
\end{equation}
where $R(\cdot, \cdot)$ is a reward function measuring $(\boldsymbol{y},\boldsymbol{y}')$'s divergence, and $Z^R$ denotes the normalizer.
We here use BLEU~\citep{papineni2002bleu} as a lexical measure for optimization.

We then seek a series of contrastive conditions that are sufficient and necessary to generate the data distribution $p^R(\cdot|\boldsymbol{x}, \boldsymbol{y})$:
\begin{equation}
\label{eq:nsc}
\begin{split}
    \forall \boldsymbol{y}'_+,&\boldsymbol{y}'_-: \log \frac{p^R(\boldsymbol{y}'_+|\boldsymbol{x},\boldsymbol{y})}{p^R(\boldsymbol{y}'_-|\boldsymbol{x},\boldsymbol{y})} = \epsilon(\boldsymbol{y}'_+,\boldsymbol{y}'_-|\boldsymbol{y}) \\
    &\epsilon(\boldsymbol{y}'_+,\boldsymbol{y}'_-|\boldsymbol{y})= R(\boldsymbol{y}'_+,\boldsymbol{y})-R(\boldsymbol{y}'_-,\boldsymbol{y}),
\end{split}
\end{equation}
where $\boldsymbol{y}'_+$ and $\boldsymbol{y}'_-$ are two samples in target space satisfying $R(\boldsymbol{y}'_+,\boldsymbol{y})>R(\boldsymbol{y}'_-,\boldsymbol{y})$, without loss of generality, and $\epsilon(\boldsymbol{y}'_+,\boldsymbol{y}'_-|\boldsymbol{y})$ represents $(\boldsymbol{y}'_+,\boldsymbol{y}'_-)$'s reward gap.
Detailed proof of necessity and sufficiency is shown in Appendix~\ref{app:proof-sufficient-necessary}.
We here focus on its necessity, which is treated as a constraint to model distribution in the future. 
Consider a bundle of generated sequences $\{\boldsymbol{y}'_k\sim q(\cdot|\boldsymbol{x})\}_{k=1}^K$, satisfying $\forall i>j: R(\boldsymbol{y}'_i,\boldsymbol{y})>R(\boldsymbol{y}'_j,\boldsymbol{y})$, where $K$ is the number of samples.
Assuming a small positive lower bound $\epsilon_\text{LB}$, namely $\forall i>j: \epsilon(\boldsymbol{y}'_i,\boldsymbol{y}'_j|\boldsymbol{y}) \ge \epsilon_\text{LB}$, we have looser necessary conditions pairwisely, as
\begin{equation}
\label{eq:loose-constraints}
    \forall i>j: \log \frac{p^R(\boldsymbol{y}'_i|\boldsymbol{x},\boldsymbol{y})}{p^R(\boldsymbol{y}'_j|\boldsymbol{x},\boldsymbol{y})} \ge \epsilon_\text{LB},
\end{equation}
where $\epsilon_\text{LB}$ is treated as a hyper-parameter.
We neglect the event $R(\boldsymbol{y}'_i,\boldsymbol{y})=R(\boldsymbol{y}'_j,\boldsymbol{y})$ whose probability is negligible.
By taking $\{\boldsymbol{y}'_{j+1},\ldots, \boldsymbol{y}'_{i-1}\}$ as intermediate states, we derive stronger necessary conditions,
\begin{equation}
\label{eq:ranking-constraints}
    \forall i>j: \log \frac{p^R(\boldsymbol{y}'_i|\boldsymbol{x},\boldsymbol{y})}{p^R(\boldsymbol{y}'_j|\boldsymbol{x},\boldsymbol{y})} \ge (i-j)\epsilon_\text{LB},
\end{equation}
which is also a ranking-based condition.
After all, we impose the conditions in Eq.~\ref{eq:ranking-constraints} as constraints to model distribution $q(\cdot|\boldsymbol{x})$, and penalize $q(\cdot|\boldsymbol{x})$ voilating the constraints:
\begin{equation}
\label{eq:final-constraints}
    \forall i>j: \mathcal{L}_{i,j}=\max \{0, -\log {q(\boldsymbol{y}'_i|\boldsymbol{x})} +\log{q(\boldsymbol{y}'_j|\boldsymbol{x})} + (i-j)\epsilon_\text{LB}\}.
\end{equation}
In typical decomposition-based sequence modeling, $q(\cdot|\boldsymbol{x})$ is heavily biased by sequence length, where shorter ones obtain higher probability and are more favorable~\citep{wu2016google}.
To eliminate the length bias, we normalize $q(\cdot|\boldsymbol{x})$ by the target length $||\boldsymbol{y}'||$. In experiments, we implement Eq.~\ref{eq:final-constraints} by: 1) sampling $K$ hypotheses from the model distribution, 2) constructing $\binom{K}{2}$ pairs, and 3) calculating $\mathcal{L}_{i,j}$ for each pair.   Eq.~\ref{eq:final-constraints} is a contrastive objective, facilitating $M_{p,q}(\cdot|\boldsymbol{x})$ in Eq.~\ref{eq:reverse-kl} by penalizing violated constraints and we show that Eq.~\ref{eq:final-constraints} can be written with similar form with Eq.~\ref{eq:reverse-kl} in Appendix~\ref{app:contrastive}, yet it's crucial to note that they are not strictly equivalent.
By optimizing with the contrastive constraints, we show that the original regularization (DP) loss is less likely to obviously increase in Figure~\ref{fig:cl_dp}.

\subsection{Implementation on NATs}\label{sec:impl}
Practical implementation of the contrastive loss includes sampling multiple hypotheses from the model distribution and ranking the likelihood of generated sequences according to some evaluation metrics such as BLEU.  
The sampling algorithm can be Noisy Parallel Decoding~\cite{cho2016noisy, gu2018nonautoregressive} on vanilla NATs which randomly add Gaussian noise to the word embedding layer to generate diverse output,  but it involves $sampling\_size$ times additional forward passes consequently leading to increased training costs. Results on vanilla NATs are shown in Table~\ref{tab:glat}.
As for DP-based models, sampling positive-negative examples do not obviously bring training cost since the sampling process can be viewed as combining different decoder predictions (See Figure~\ref{fig:dp_vanilla} in Appendix) and can be typically implemented by beam search or nucleus sampling~\cite{holtzman2019curious}. 

We select previous the best DP-based model DA-Transformer and its pretrained version (for summarization tasks) as our base model in the main experiments considering the following two advantages. i) \textbf{Accuracy}: Samples used to minimize the divergence in Eq.\ref{eq:reverse-kl} are expected to be high-quality and single-modal. In other words, if all training samples are of low quality, the model will be tuned in the wrong direction. Basing on SOTA provides a high-quality sampling space that stabilizes the optimization process and benefits the performance. ii) \textbf{Efficiency}:  DA-Transformer is a DP-based model which means sampling multiple hypotheses usually does not involve more cost with the sample size increasing.
As mentioned before, DA-Transformer and most DP-based models usually have an extended decoder length to relax the strict alignment between decoder prediction and target words. Actually, sampling hypotheses from DP-based models is equivalent to combing the predicted words from different decoder positions, which means a generated sequence (length<$L$) is a subset of words from all $L$ predicted words (see the example in Figure~\ref{fig:dp_vanilla} Appendix) without more floating point operations or forward passes (see Appendix~\ref{sec:codat}). 

In experiments, we find the process of contrasting low-likelihood sequences adversely has a negative effect on the original DP training loss, resulting in instability and unsatisfied results during training. In order to avoid training on such samples, we adopt two filtering tricks: 1) hypotheses-level filtering: sampling a larger amount of hypotheses first and then keeping only the top 25\% for training.  2) Samples-level filtering:  some challenging training samples may exhibit significantly low sequence likelihood on target from data distribution and usually have low-quality sampling space. Persisting in optimizing on these samples with model generated sequences will ultimately result in poor performance (in both reinforcement learning and contrastive learning setting). Therefore, we exclude training samples with DP training loss exceeding $\alpha$ when optimizing the discrepancy $M_{p,q}(\boldsymbol{y}'|\boldsymbol{x})$ with target sequences from the model distribution. $\alpha$ is practically set to the value of $\mathbb{E}_{\boldsymbol{y}\sim \mathcal{V}}[- \log q(\boldsymbol{y}|\boldsymbol{x})]$ where $\mathcal{V}$ means the validation set.

\section{Experiments}
We verify \method on three different text generation tasks, including machine translation, summarization, and paraphrasing, with five different benchmarks. We show that the results of non-autoregressive Transformers can be boosted to a new level with the contrastive learning framework. Particularly, we exceed the autoregressive Transformer on the two summarization datasets and achieve an improvement of \textbf{0.83} BLEU score on average on the translation datasets and paraphrase dataset. Training details and  hyperparameters  can be found in Appendix~\ref{sec:setup}

\subsection{Quantitative Results}
This section shows the results of \method on serval mainstream text generation benchmarks. To prove our method has a stronger modality learning ability, we directly use the \textbf{raw} dataset for machine translation and paraphrasing. We use distillation data for summarization following previous work. We use the most standard evaluation metrics for each task in our main results.

\paragraph{Machine Translation}\label{sec:mt}
For machine translation, we evaluate \method on WMT14 English $\leftrightarrow$ German translation task (4M samples), WMT17 English $\leftrightarrow$ Chinese translation task (20M samples) and follow the default split of test and validation set of FairSeq~\cite{ott-etal-2019-fairseq}. We evaluate our model on all datasets with tokenized BLEU except WMT17 En$\rightarrow$Zh, which is assessed with SacreBLEU~\cite{sacrebleu2018}. We also reproduce the  autoregressive Transformer and DA-Transformer (DAT) with the same running environment as \method. Since previously reported results is trained with 100k steps and a batch size of 32k tokens, our implementation achieves higher performance. The main results are shown in Table~\ref{tab:nmt-resutls}.  Among all iterative NATs and fully NATs, our method \method achieves the best result and outperforms previous state-of-the-art by a remarkable margin.  The average performance gap between the autoregressive transformer and non-autoregressive transformer is reduced to 0.4 with our contrastive learning method. We also show that directly optimizing the reward can also bring improvements to the original DAT. However, due to its negative impact on the original training loss, optimizing the constraints usually brings higher performance.  All the hypotheses are sampled from the model distribution in parallel without normalizing and selecting at each step like the beam search algorithm. Therefore, it still maintains a 10.2 times speedup during inference with only native PyTorch operations. 
\renewcommand\arraystretch{0.95}
\begin{table*}[t]
\begin{center}
\vspace{-1em}
\caption{BLEU scores on WMT14 En$\leftrightarrow$De, WMT17 Zh$\leftrightarrow$En translation tasks. The results of previous NATs are quoted from their papers and \citet{huang2022directed}. +Reward means we optimize the reward distribution derived from Eq.~\ref{eq:reverse-kl} and +Constraints means we optimize the contrastive objective.  The best results for NATs are in \textbf{bold}.
}
\label{tab:nmt-resutls}
\tabcolsep0.1 in
\begin{tabular}{lccccccc}
\toprule
\multirow{2}{*}{\vspace{-2mm}\bf Model} & \multirow{2}{*}{\bf  Iter.} & \multirow{2}{*}{\bf  Speedup} & \multicolumn{2}{c}{\bf WMT14} &\multicolumn{2}{c}{\bf WMT17}   \\
& & & \bf En-De & \bf De-En &\bf En-Zh  &\bf Zh-En \\ 
\midrule
Transformer~\cite{transformer2017vaswani}& $N$ & 1.0x & 27.6 &31.4 & 34.3 & 23.7\\
Transformer (Ours)  & $N$ & 1.0x & 27.85 & 31.64 & 34.89 & 23.72\\
\midrule
CMLM~\cite{cmlm2019ghazvininejad} & 10 & 2.2x & 24.61 & 29.40 & -- & -- \\
SMART~\cite{smart2020}  & 10 & 2.2x & 25.10 & 29.58 & -- & -- \\
DisCo~\cite{disco2020kasai}  & $\approx	4$ & 3.5x & 25.64 & -- & -- & -- \\
Imputer~\cite{imputermt2020saharia} & 8 & 2.7x & 25.0 & -- & -- & --\\
CMLMC~\cite{cmlmc2021}  & 10 & 1.7x & 26.40 & 30.92 & -- & --\\

\midrule
Vanilla NAT~\cite{nat2018gu} & 1 & 15.3x & 11.79 & 16.27 & 18.92 & 8.69\\
AXE ~\cite{axe2020ghazvininejad}  & 1 & 14.2x & 20.40 & 24.90 & -- & --\\
OaXE~\cite{oaxe2021du} & 1 & 14.2x & 22.4 & 26.8 & -- & -- \\
CTC~\cite{ctc2018libovicky} & 1 & 14.6x & 18.42 & 23.65 & 26.84 & 12.23\\
GLAT~\cite{qian2021glancing}  & 1 & 15.3x & 19.42 & 26.51 & 29.79 & 18.88\\
CTC + GLAT~\cite{qian2021glancing}  & 1 & 14.2x & 25.02 & 29.14 & 30.65 & 19.92\\
DAT~\cite{huang2022directed}  & 1 & 13.9x & 26.57 & 30.68 & 33.83 & 22.82\\
DAT+Viterbi~\cite{shao2022rephrasing}  & 1 & 13.2x & 26.89 & 31.10 & 33.65 & 23.24\\
\midrule
DAT (Ours)  & 1 & 13.9x & 26.63 & 30.73 & 33.56 & 22.68\\
\quad + Reward  & 1 & 10.2x & {27.06} & {31.22} & {33.59} & {23.16} \\
\quad+ Constraints   & 1 & 10.2x & \textbf{27.40} & \textbf{31.64} & \textbf{34.23} & \textbf{23.71} \\
\bottomrule
\end{tabular}
\end{center}
\end{table*}

\paragraph{Paraphrase}\label{sec:paraphrase}
For paraphrase, we adopt the Quora Question Pairs (QQP)\footnote{\url{https://www.kaggle.com/c/quora-question-pairs}} collected from the Quora community question answering forum with 147 training pairs. We evaluate generated results with BLEU and ROUGE-L. The performance gap between \method and the autoregressive baseline is reduced to 0.02 BLEU and we even get 0.6 improvements over the autoregressive model in terms of ROUGE-L.
\begin{figure}[!t]
\begin{minipage}[t]{0.53\textwidth}
\centering
\captionof{table}{{Results on the WMT14 en-de benchmark of vanilla model GLAT and DP-based model DAT with different training objectives: MLE, Reward, and Constraints. For GLAT, directly optimizing the reward distribution also yields satisfactory results, but for DAT, optimizing using constraints usually leads to better performance. We use the NPD size = 7.}
}
\label{tab:glat}
\resizebox{1.0\columnwidth}{!}{%
\setlength\tabcolsep{7pt}
\begin{tabular}{lccc}
\toprule
{\textbf{Model}} & \textbf{Dataset} & \textbf{Objective} & \textbf{BLEU}   \\
\midrule
GLAT & Raw  & $\text{D}(p \parallel q)$ & 19.42 \\
GLAT & KD  & $\text{D}(p \parallel q)$ & 25.28 \\
GLAT & KD  & $\text{D}(q \parallel p)$ (Reward) & \textbf{25.71} \\
GLAT & KD  & $\text{D}(q \parallel p)$(Constraints) & 25.66 \\
\midrule
DAT & Raw & $\text{D}(p \parallel q)$ & 26.63  \\
DAT & Raw & $\text{D}(q \parallel p)$ (Reward) & 27.06  \\
DAT & Raw & $\text{D}(q \parallel p)$(Constraints) & \textbf{27.40}  \\
\bottomrule
\end{tabular}}
\vspace{1em}

\label{tab:paraphrase}
\vspace{-1em}
\end{minipage}
\hfill
\begin{minipage}[t]{0.45\textwidth}
\centering
\captionof{table}{{Results on the test set of QQP in terms of BLEU and ROUGE-L. Results with $*$ are from \citet{gong2022diffuseq}. The best results of NAT are in bold. \method has $\sim$6.8$\times$ speedup compared with the autoregressive model.}
}
\label{tab:code-results}
\resizebox{1.0\columnwidth}{!}{%
\setlength\tabcolsep{6pt}
\begin{tabular}{lcc}
\toprule
{\textbf{Model}} & \textbf{BLEU} & \textbf{ROUGE-L}   \\
\midrule
GRU-attention$^*$ & 18.94 &51.29  \\
Transformer & 27.80 & 57.94  \\
\midrule
LevT~\cite{DBLP:journals/corr/abs-1905-11006}$^*$  & 22.68 & 57.95 \\
GLAT~\cite{qian2021glancing} & 25.16 & 58.28 \\
DiffuSeq~\cite{gong2022diffuseq}$^*$  & 24.13 & \textbf{58.80} \\
DAT~\cite{huang2022directed} & 27.00 & 58.31  \\
\midrule
\method &\textbf{27.78} & {58.54} \\
\bottomrule
\end{tabular}}
\end{minipage}
\end{figure}

\paragraph{Summarization}\label{sec:summ}
For summarization, we achieve new state-of-the-art for NATs on two  widely-used news summarization datasets XSum~\cite{narayan2018don} and Gigaword~\cite{rush2015neural}.  Gigaword has 4M article-headline pairs and XSum has 227k articles with human-written summaries from BBC News. We use ROUGE~\cite{lin2004rouge} as the evaluation metric.
To have a fair comparison with previous work, we also perform knowledge distillation before training. Table~\ref{tab:summary-results} shows the ROUGE scores on Summarization datasets. Similar to machine translation,  we also implement an autoregressive Transformer with the same running environment e.g., the same knowledge distillation dataset and training steps. \method surpasses the autoregressive baselines on both XSum and Gigaword.  Generally, \method improves DAT by more than 0.5 ROUGE-1 score and improves the pretrained version of DAT by 1.07 ROUGE-1 score.  On XSum, \method without pretraining obtains on-par performance with BANG~\cite{qi2021bang} which is pretrained on large-scale unlabeled corpus and exceeds BANG by 2.44 ROUGE-2. \method achieves a speedup ratio of  $\sim 8.4 \times$  on XSum and  $\sim 6.0 \times$ on Gigaword compared with the autoregressive basline. In addition, we also validate our method by building on the strong pretrained version of DAT~\cite{huang2023directed}. Results show that \method (pretrained) even exceeds many strong pretrained AT baselines on XSum.
\renewcommand\arraystretch{0.95}
\begin{table*}[t]
\begin{center}
\caption{ROUGE scores on Summarization datasets. \textit{Avg} means the average of ROUGE-1, ROUGE-2 and ROUGE-L.  Results with $^\dag$ are token from~\citet{su2021non} and results with $^*$ are from~\citet{qi2021bang}. Results that exceed the autoregressive model are underlined (w/o pretrained), and the best results of NAT are in bold. \method accelerates the autoregressive model by $\sim$8.4  times on XSum and  $\sim$6.0 times on Gigaword. The third block contains results of pretrained baselines.}
\tabcolsep0.1 in
\begin{tabular}{lccccccccc}
\toprule
\multirow{2}{*}{\vspace{-2mm}\bf Model} & \multicolumn{4}{c}{\bf XSum} & \multicolumn{4}{c}{\bf Gigaword}   \\
& \bf R-1 & \bf R-2 &\bf  R-L & \textit{Avg} &  \bf R-1 & \bf R-2 &\bf  R-L & \textit{Avg} \\ 
\midrule
Transformer~\cite{transformer2017vaswani} & 30.66 & 10.80 & 24.48 & 22.0 & 35.74 & 16.97 & 33.43 & 28.7 \\
Transformer (Ours)  & 32.07 & 11.31 & 25.96 & 23.1 & 36.99 & 18.22 & 34.48 & 29.9\\
\midrule
Vanilla NAT~\cite{nat2018gu}$^*$ & 24.04 & 3.88 & 20.32 & 16.1 & 27.20 & 8.96 & 25.58 & 20.6\\
LevT~\cite{DBLP:journals/corr/abs-1905-11006}  & 24.75 & 4.18 & 20.87 & 16.6 & 30.40 & 11.89 &  29.56 & 23.9\\
CMLM~\cite{cmlmc2021}$^*$  & 23.82 & 3.60 & 20.15 & 15.8 & -- & -- &  -- & -- \\
NART-CRF~\cite{sun2019fast}$^\dag$ & -- & -- & -- & -- &30.29 & 12.61 & 28.71 & 23.9\\
BERT-NAG~\cite{su2021non}$^\dag$& -- & -- & -- & -- & 35.05 & 16.48 & 33.28 & 28.3\\
DAT~\cite{huang2022directed} &31.87 & 11.02 & 25.93 & 22.9 & 36.52 & 17.94 & 34.30 & 29.6\\
\midrule
BANG~\cite{qi2021bang}$^*$ & {{32.59}} & 8.98 & {{27.41}} & 23.0 & -- & -- & -- & -- \\
MIST~\cite{jiang2021improving} & 34.63 & 11.29 & 28.70 & 24.9 & -- & -- & -- & --  \\
DAT (Pretrained)~\cite{huang2023directed} & 38.80 & 16.07 & 31.78 & 28.9 & 36.15 & 17.49 & 33.84 & 29.2\\
\midrule
\method & \underline{32.45} & {\underline{11.42}} & \underline{26.30} & \underline{23.4} & \textbf{\underline{37.01}} & \textbf{\underline{18.68}} & \textbf{\underline{34.63}} & \textbf{\underline{30.1}} \\
\method (Pretrained) & \textbf{39.87} & \textbf{{17.38}} & \textbf{32.61} & \textbf{{30.0}} & {36.92} & {18.23} & {34.46} & 29.9 \\
\bottomrule
\end{tabular} 

\label{tab:summary-results}
\end{center}
\end{table*}

\subsection{Discussion}



\renewcommand\arraystretch{0.4}
\begin{table*}[h!]
    \small
    \centering
\begin{tabular}{@{}c  p{0.7\textwidth}}
\toprule
\multicolumn{1}{l}{\bf Ground truth:} & According to the proposal, parking fees are to be increased by 50 percent.\\ 
\cmidrule{1-2}
\multirow{3}{*}{ 3 outputs from \bf DAT} & \textbf{1.} There is an increase in \textcolor{myOran}{parking charges} \textcolor{myBlue}{should be} increased by 50 \%. \\ 
\cmidrule{2-2}
     & \textbf{2.} The proposed increase in \textcolor{myOran}{parking charges} \textcolor{myBlue}{should therefore} be increased by 50 \%.\\
\cmidrule{2-2}
      & \textbf{3.} Accordingly, the increase in \textcolor{myOran}{parking charges}  \textcolor{myBlue}{are to be} increased by 50 \%. \\ 
\cmidrule{1-2}
\multirow{3}{*}{ 3 outputs from \bf CoDAT} & \textbf{1.}The proposal is that parking charges should therefore be increased by 50 \%. \\ 
\cmidrule{2-2}
     & \textbf{2.} According to it, the parking charges should be increased by 50\% .\\
\cmidrule{2-2}
      & \textbf{3.} According to it, there will  be increase in \textcolor{myOran}{parking charges}  \textcolor{myBlue}{should be} increased by 50 \% . \\ 
\bottomrule
\end{tabular}
\caption{ An example from WMT14 De$\rightarrow$En translation. We adopt a Sampling size = 128 and present the top 3 hypotheses from DAT and CoDAT. Given the input German  $\boldsymbol{x} = $"\texttt{Demnach sollen die Parkgebühren um 50 Prozent erhöht werden .}"   Though, the first-order dependency introduced by DAT effectively reduces repeated words, it still can not prevent the generated sequence mixing words from several possible translations while the rank of single-modal translation is higher in the sampling results of \method }
\label{tab:wmt-example}
\end{table*}

\begin{figure}
\centering
\begin{subfigure}{.46\textwidth}
    \begin{center}
        \includegraphics[width=\textwidth]{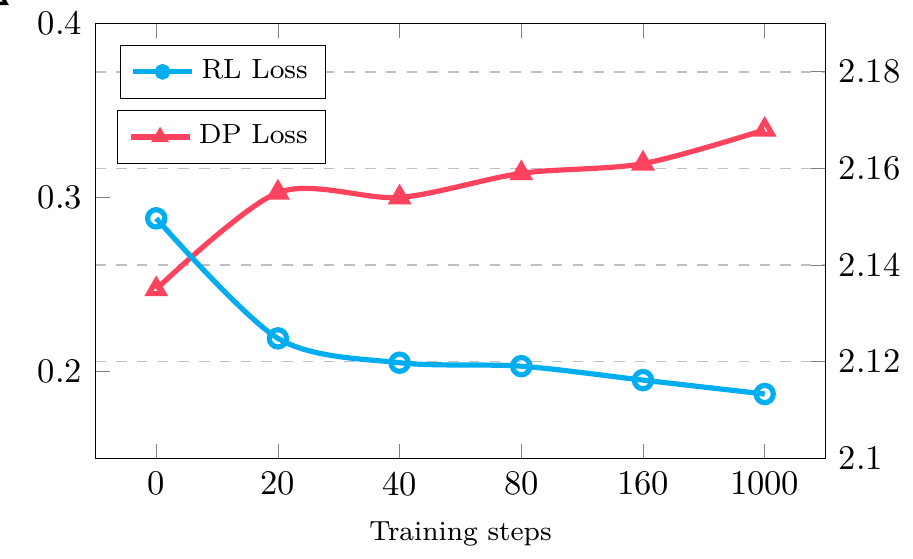}
\caption{ Reinforcement Learning (RL) loss and DP Loss on the validation set of WMT14 en-de benchmark. The left Y-axis represents the value of RL loss, while the right Y-axis corresponds to the value of DP loss. }
\label{fig:rl_dp}
    \end{center}
\end{subfigure}
\hspace{4mm}
\begin{subfigure}{.46\textwidth}
  \begin{center}
    \includegraphics[width=\textwidth]{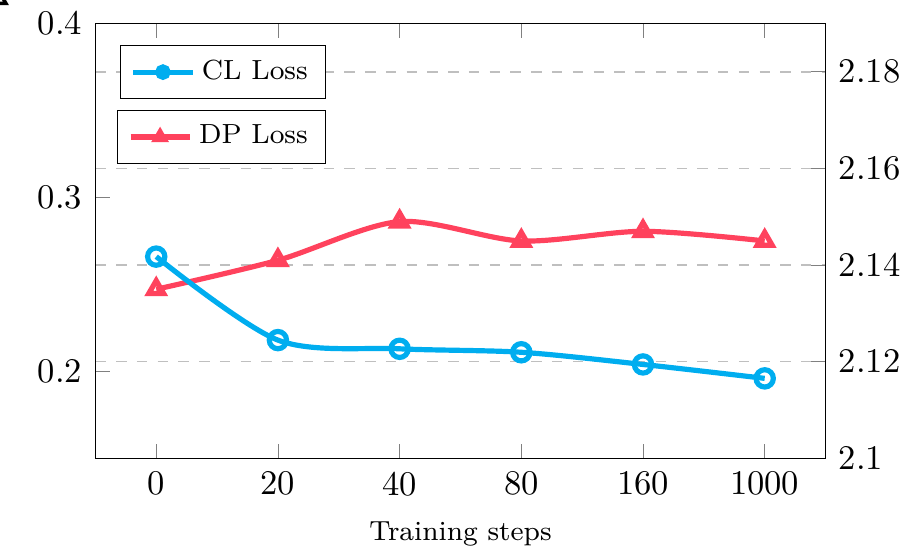}
        \caption{ Contrastive Learning (CL) loss and DP Loss on the validation set of WMT14 en-de benchmark. The left Y-axis represents the value of CL loss, while the right Y-axis corresponds to the value of DP loss. }
    \label{fig:cl_dp}
  \end{center}
\end{subfigure}
\end{figure}
\paragraph{Reducing Data Modality}
Normalized Corpus-level Multimodality(NCM)~\cite{sun2020approach} is a metric for measuring corpus-level multimodality for NATs (the lower, the better). Given a translation dataset $\mathcal{D}$ we have the NCM as: $\frac{\mathbb{E}_{(\boldsymbol{x},\boldsymbol{y})\sim\mathcal{D}}[-\log p(\boldsymbol{y}|\boldsymbol{x})]}{\mathbb{E}_{\boldsymbol{y}\sim\mathcal{D}}[\boldsymbol{|\boldsymbol{y}|}]}$. A non-autoregressive model concurrently considering numerous potential translations, it tends to distribute excess probability across different sequences. As a result, it ends up creating translations that are less likely. We validate the NCM of \method and DAT on WMT14 En$\rightarrow$De and De$\rightarrow$En translation task. Results show that we reduce the NCM of DAT from 0.86 to 0.72 on En$\rightarrow$De translation and from 0.91 to 0.79 on De$\rightarrow$En.

But there is no widely accepted metric for modality learning of NATs, we also provide a case study from WMT14 De$\rightarrow$En in Table~\ref{tab:wmt-example} to showcase that the first-order dependency DP loss can not guarantee generated texts are single modal. The top-3 hypotheses sampled from DAT in Table~\ref{tab:wmt-example} are all mixed modal translations. For example, the first hypothesis from DAT is a mixed translation of "\texttt{There is a 50\% increase in parking charges}"  and "\texttt{parking charges should be increased by 50\%}" due to ``charges'' and ``should'' are very likely to appear together in the training corpus so that they will be assigned a high transition score. But mixing words and phrases from the two modalities will always result in a bad translation while evaluating. We show that the problem can be alleviated via training with contrastive constraints which improves the ranking of high-quality and single-modal hypotheses (there is only 1 translation mixing words from different modalities results).

\paragraph{Scaling with Decoder Length and Sampling Size} We further examine how the decoder length of DAT affects our method. Notice that a longer decoder length often provides an enlarged sampling space. We vary the decoder length from 128 to 1,024 and measure the performance gap between \method and  DAT according to BLEU on the test set of WMT14 En$\rightarrow$De translation task. Results are shown in Figure~\ref{fig:lambda}, when we have a decoder length longer than 384, \method generally has an improvement of more than 0.5 BLEU.  \method (decoder length = 384) realizes on-par performance with DAT (decoder length = 1024).
However, when the decoder length is set to a smaller value of 256, the improvement seems marginal. Recalling that \method advocates modality learning by optimizing  $D(q \parallel p)$ with samples from model distribution $\boldsymbol{y}'\sim q(\cdot|\boldsymbol{x})$. We attribute the low performance of \method  to the reduced sampling space when with a short decoder length. We measure the sampling space with a heuristic method: BLEU of the best sample. We respectively calculate the best result among the decoded 64 samples (named Oracle BLEU) with decoder lengths set to 256 and 512. The longer length achieves 34.12 Oracle BLEU while the shorter length only achieves 30.28 Oracle BLEU. The performance gap indicates the size of the sampling space plays an important role in our method. We further examine the performance of \method with sampling size. With the increase in sampling size, single-modal translations also have a higher probability of appearing in the sampling space. According to our training objective, their ranking in the sampled outputs will be elevated, thus improving the model's performance.

\begin{figure}
\centering
\begin{subfigure}{.415\textwidth}
    \begin{center}
        \includegraphics[width=\textwidth]{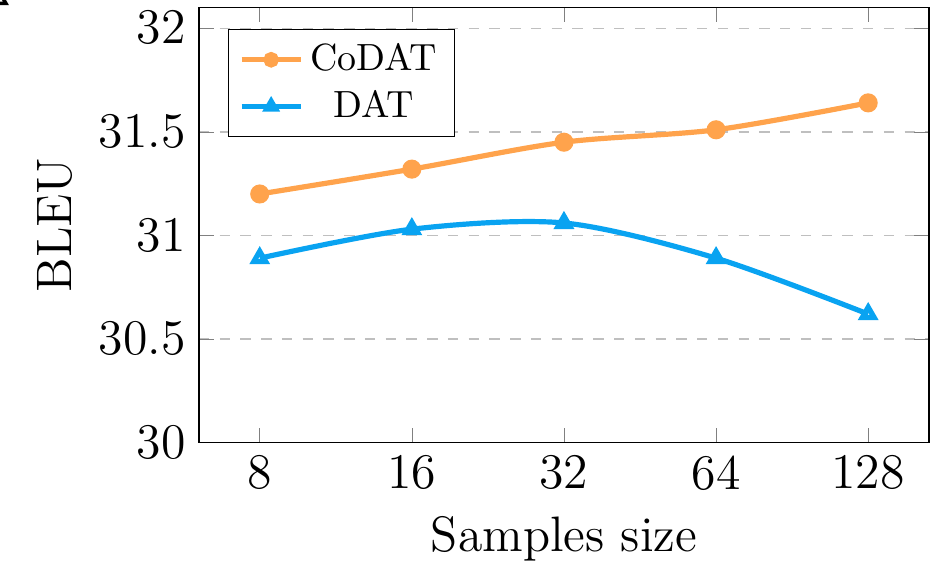}
\caption{Test BLEU score with the sample size on WMT14 De$\rightarrow$En translation.  We use the size of generated samples as the X-axis and the Y-axis represents the BLEU score.}
\label{fig:sample_size}
    \end{center}
\end{subfigure}
\hspace{4mm}
\begin{subfigure}{.42\textwidth}
  \begin{center}
              \vspace{0.1mm}
    \includegraphics[width=\textwidth]{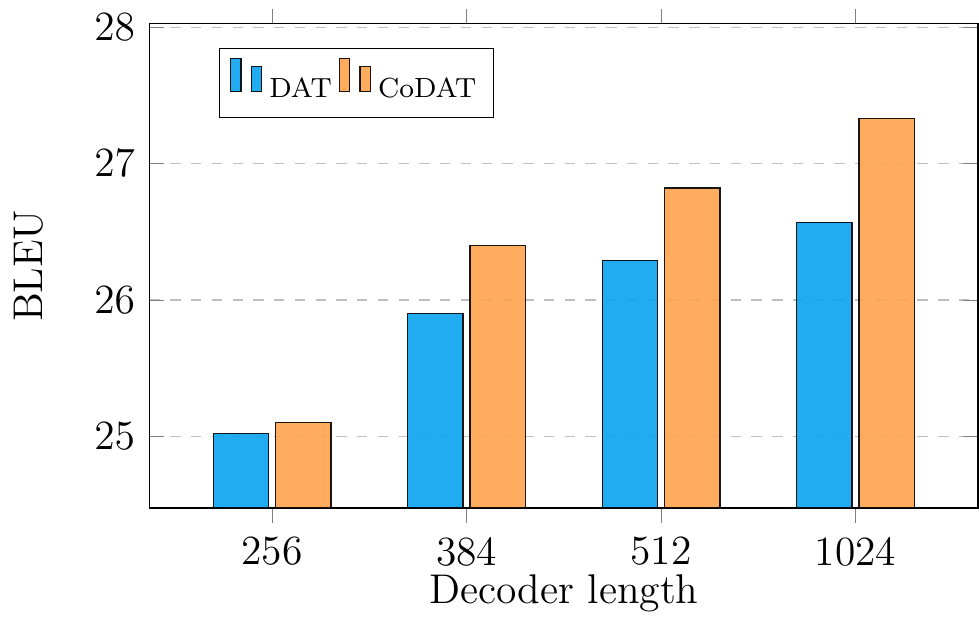}
        \caption{ Test BLEU score of \method and DAT with the decoder length on WMT14 En$\rightarrow$De translation. We use the decoder length as the X-axis and the Y-axis represents the BLEU score. }
    \label{fig:lambda}
  \end{center}
\end{subfigure}
\end{figure}
\section{Related work}
\paragraph{Non-autoregressive Text Generation}
Non-autoregressive text generation is originally developed in machine translations~\cite{nat2018gu, jmnat2020guo}.
\citet{nat2018gu} design the first NAT model that decodes the whole sequence in parallel. Fully NATs predict the sequence with only one forward pass.  Compared with fully NATs, iterative NATs~\cite{iterativerefinement2018lee, levenshtein2019gu, cmlm2019ghazvininejad, jmnat2020guo}  repeatedly refine the prediction by iterative decoding. Iterative NATs usually achieve better performance but trade efficiency for accuracy. As introduced in the background, non-iterative NATs are greatly improved by DP training~\cite{ctc2018libovicky, axe2020ghazvininejad, huang2022directed} which abandons the position-wise alignment of Vanilla NAT and allows flexible length. Based on DAT, the latest work FDAT~\cite{ma2023fuzzy} introduces fuzzy alignment for DAT and greatly improves the performance on translation tasks but sacrifices the generation diversity.   Non-autoregressive models are also introduced in other areas. \citet{wiseman2018learning} propose a hidden semi-Markov model for data-to-text generation. BANG~\cite{qi2021bang} and MIST~\cite{jiang2021improving} are  pretraining frameworks for NAT models. \citet{su2021non, sun2019fast} successfully utilize pre-trained encoder BERT and a conditional random field (CRF) to non-autoregressive summarization and machine translation. 
Most NATs strongly depend on sentence-level knowledge distillation~\cite{gu2018nonautoregressive} datasets. This requires pretraining an autoregressive model to generate the training set but is very useful to reduce the data modality~\cite{kdnat2020zhou}.

\paragraph{Contrastive Learning for Text Generation}
The triplet loss used in this work is originally proposed in faceNet~\cite{schroff2015facenet}, aiming to learn a good representation via contrasting positives to negatives.  
Contrastive learning has also been widely used in autoregressive text generation~\cite{lee2020contrastive, an2022cont, li2022keywords} and summarization~\cite{liu2021simcls, su2022contrastive}, and to the best of our knowledge, we are the first contrastive learning framework in non-autoregressive generation. In the field of text summarization, \citet{zhong2020extractive} propose contrastive learning to model that the document and its summary should be close in the embedding space.  Previous work on autoregressive generation~\cite{lee2020contrastive, liu2022brio, an2022cont} shares a common motivation: mitigating the exposure bias problem. The contrastive loss we used in training has a similar format to the training objective of previous autoregressive models BRIO~\cite{liu2022brio} and CoNT~\cite{an2022cont}. Our work differs from theirs in two aspects. 1) Different settings: our method is based on the setting of DP-based non-autoregressive models which has a very different input space and training process compared with the autoregressive model. 2) Different motivations: we hold the motivation of helping the modality learning of NATs and our contrastive objective is derived from seeking contrastive constraints while they hold the motivation from exposure bias and learning fine-gained representations.

\section{Conclusion}
In this paper, we develop the first contrastive learning framework for  non-autoregressive generation. The goal of our contrastive learning objective is alleviating the modality learning of NATs by using training samples from the model distribution instead of complex data distribution. In order to stabilize training, we derive contrastive constraints which finally lead to a contrastive learning loss.  Experiments on three text generation tasks demonstrate that our method achieves the best results on all these tasks for non-autoregressive models. Our work has many \textbf{limitations} please refer to Appendix~\ref{limit} for details.

\bibliography{neurips_2023}

\newpage
\appendix

\section{Reverse KL and Reinforcement Learning}
\label{app:rkl-rl}
Above all, we treat $p(\boldsymbol{y}|\boldsymbol{x})$ as Eq.~\ref{eq:pr}.
\begin{align*}
    &\;\;\;\;\text{KL}(q\parallel p) \\
    &= \mathbb{E}_{\boldsymbol{y}'\sim q(\cdot|\boldsymbol{x})}[\log \frac{q(\boldsymbol{y}'|\boldsymbol{x})}{p(\boldsymbol{y}'|\boldsymbol{x})}] \\
    & = \mathbb{E}_{\boldsymbol{y}'\sim q(\cdot|\boldsymbol{x})}[\log q(\boldsymbol{y}'|\boldsymbol{x})-R(\boldsymbol{y}, \boldsymbol{y}')+\log Z^R] \\
    & = -H(q)-\mathbb{E}_{\boldsymbol{y}'\sim q(\cdot|\boldsymbol{x})}[R(\boldsymbol{y}, \boldsymbol{y}')-\log Z^R]
\end{align*}
where $H(q)$ is the entropy of model distribution, $\mathbb{E}_{\boldsymbol{y}\sim q(\cdot|\boldsymbol{x})}[R(\boldsymbol{y}, \boldsymbol{y}')-\log Z^R]$ is reinforcement learning with a normalized reward.
Thus minimizing $\text{KL}(q\parallel p)$ implicitly maximizes rewards on model distribution. 
\section{Sufficiency \& Necessity}
\label{app:proof-sufficient-necessary}

\begin{proposition}
For any sequence pair $\boldsymbol{y}’_1,\boldsymbol{y}'_2$, $\boldsymbol{y}’_1,\boldsymbol{y}'_2 \sim p^R(\boldsymbol{y}'|\mathbf{x},\boldsymbol{y})=\frac{1}{Z^R}\exp(R(\boldsymbol{y},\boldsymbol{y}'))$ is sufficient and necessary to $\log \frac{p^R(\boldsymbol{y}'_1|\mathbf{x},\boldsymbol{y})}{p^R(\boldsymbol{y}'_2|\mathbf{x},\boldsymbol{y})}=R(\boldsymbol{y}'_1,\boldsymbol{y})-R(\boldsymbol{y}'_2,\boldsymbol{y})$.
\end{proposition}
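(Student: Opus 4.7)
The plan is to prove the two directions of the biconditional separately. The sufficiency direction is a direct substitution, while the necessity direction reduces the pairwise log-ratio identity to a global constraint on $\log p^R - R$, from which the Boltzmann form follows up to a normalizer that the probability axiom pins down uniquely.

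For sufficiency, I would assume $p^R(\boldsymbol{y}'|\mathbf{x},\boldsymbol{y}) = \frac{1}{Z^R}\exp(R(\boldsymbol{y},\boldsymbol{y}'))$ and write out
$\log \frac{p^R(\boldsymbol{y}'_1|\mathbf{x},\boldsymbol{y})}{p^R(\boldsymbol{y}'_2|\mathbf{x},\boldsymbol{y})}$ directly. The normalizer $Z^R$ appears identically in numerator and denominator and cancels, leaving $R(\boldsymbol{y},\boldsymbol{y}'_1) - R(\boldsymbol{y},\boldsymbol{y}'_2)$, which is the desired right-hand side. This is a two-line calculation that I would simply record for completeness.

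For necessity, I would rearrange the assumed identity as
\[
\log p^R(\boldsymbol{y}'_1|\mathbf{x},\boldsymbol{y}) - R(\boldsymbol{y},\boldsymbol{y}'_1) \;=\; \log p^R(\boldsymbol{y}'_2|\mathbf{x},\boldsymbol{y}) - R(\boldsymbol{y},\boldsymbol{y}'_2),
\]
which by hypothesis holds for every pair $\boldsymbol{y}'_1,\boldsymbol{y}'_2$ in the target space. The key step is to interpret this as: the map $\boldsymbol{y}' \mapsto \log p^R(\boldsymbol{y}'|\mathbf{x},\boldsymbol{y}) - R(\boldsymbol{y},\boldsymbol{y}')$ is constant, independent of $\boldsymbol{y}'$. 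Denote that constant by $-\log Z$, so that $p^R(\boldsymbol{y}'|\mathbf{x},\boldsymbol{y}) = \frac{1}{Z}\exp(R(\boldsymbol{y},\boldsymbol{y}'))$. Summing over the (assumed discrete) target space and invoking $\sum_{\boldsymbol{y}'} p^R(\boldsymbol{y}'|\mathbf{x},\boldsymbol{y}) = 1$ forces $Z = \sum_{\boldsymbol{y}'} \exp(R(\boldsymbol{y},\boldsymbol{y}')) = Z^R$, recovering exactly the definition in Eq.~\ref{eq:pr}.

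There is no deep obstacle here; the only point requiring care is the quantifier handling in the necessity step, namely turning a pairwise equality indexed by arbitrary $(\boldsymbol{y}'_1,\boldsymbol{y}'_2)$ into a pointwise constancy statement. I would also note in passing that the argument implicitly assumes $Z^R < \infty$ over the target space so that the normalization step is well-defined; in the sequence-generation setting this is unproblematic since the target space is finite for any bounded length.
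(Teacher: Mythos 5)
Your proposal is correct and follows essentially the same route as the paper's own proof: one direction by direct substitution with cancellation of $Z^R$, the other by rearranging the pairwise identity into the statement that $\log p^R(\boldsymbol{y}'|\mathbf{x},\boldsymbol{y}) - R(\boldsymbol{y},\boldsymbol{y}')$ is constant in $\boldsymbol{y}'$ and then exponentiating and normalizing. The only differences are cosmetic: the paper attaches the labels ``sufficiency'' and ``necessity'' to the opposite directions from yours, and your explicit use of $\sum_{\boldsymbol{y}'} p^R(\boldsymbol{y}'|\mathbf{x},\boldsymbol{y})=1$ to pin down $Z=Z^R$ (together with the remark on finiteness of the normalizer) is a slightly more careful rendering of the paper's ``rewriting this into a normalization form'' step.
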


\begin{proof}
We prove the sufficiency and necessity individually as follows:

(Sufficiency)
$$
\forall \boldsymbol{y}’_1,\boldsymbol{y}'_2:
\log \frac{p^R(\boldsymbol{y}'_1|\mathbf{x},\boldsymbol{y})}{p^R(\boldsymbol{y}'_2|\mathbf{x},\boldsymbol{y})}=R(\boldsymbol{y}'_1,\boldsymbol{y})-R(\boldsymbol{y}'_2,\boldsymbol{y})
$$
$$
\forall \boldsymbol{y}’_1,\boldsymbol{y}'_2:
\log p^R(\boldsymbol{y}'_1|\mathbf{x},\boldsymbol{y}) - R(\boldsymbol{y}'_1) \quad\quad\quad\quad\quad\quad
$$
$$
= \log p^R(\boldsymbol{y}'_2|\mathbf{x},\boldsymbol{y}) - R(\boldsymbol{y}'_2)
$$
Without loss of generality, we set the difference at both side to a constant $C$
$$
\forall \boldsymbol{y}': \log p^R(\boldsymbol{y}'|\mathbf{x},\boldsymbol{y}) - R(\boldsymbol{y}') = C
$$
Thus
$$
\forall \boldsymbol{y}': p^R(\boldsymbol{y}'|\mathbf{x},\boldsymbol{y})=\exp(C)\exp(R(\boldsymbol{y},\boldsymbol{y}'))
$$
By rewriting this into a normalization form, we have 
$$
\forall \boldsymbol{y}': p^R(\boldsymbol{y}'|\mathbf{x},\boldsymbol{y})=\frac{1}{Z^R}\exp(R(\boldsymbol{y},\boldsymbol{y}'))
$$
where $Z^R=1/\exp(C)$.

(Necessity)
The necessity is quite straightforward by directly replacing $p^R(\boldsymbol{y}'_1|\mathbf{x},\boldsymbol{y})=\frac{1}{Z^R}\exp(R(\boldsymbol{y},\boldsymbol{y}'_2))$ and $p^R(\boldsymbol{y}'_2|\mathbf{x},\boldsymbol{y})=\frac{1}{Z^R}\exp(R(\boldsymbol{y},\boldsymbol{y}'_2))$ into $\log \frac{p^R(\boldsymbol{y}'_1|\mathbf{x},\boldsymbol{y})}{p^R(\boldsymbol{y}'_2|\mathbf{x},\boldsymbol{y})}$.

\end{proof}

\section{Contrastive Discrepancy}
\label{app:contrastive}

The original Eq.~\ref{eq:reverse-kl} can be rewritten to:
\begin{align*}
    &\mathrm{KL}(q \parallel p)=\mathbb{E}_{\boldsymbol{x},\boldsymbol{y} \sim \mathcal{D}, \boldsymbol{y}'\sim q(\cdot|\boldsymbol{x})}[M_{p,q}(\boldsymbol{y}'|\boldsymbol{x})] \\
    &=\mathbb{E}_{\boldsymbol{x},\boldsymbol{y} \sim \mathcal{D}, \{\boldsymbol{y}'_k\}_{k=1}^K \sim  q(\cdot|\boldsymbol{x})}[\frac{1}{K}\sum_{k}M_{p,q}(\boldsymbol{y}'_k|\boldsymbol{x})].
\end{align*}

We can also understand the contrastive loss from the perspective of the relative version of the reverse KL loss in original Eq.~\ref{eq:reverse-kl}. let  $[\boldsymbol{y}'_i, \boldsymbol{y}'_j]$  denote the difference between a sequence pair from the model distribution, and $\boldsymbol{y}$ is from the data distribution. If there is a significant disparity in the data distribution of the two samples, we will also expect a commensurate divergence in their respective model distribution.
$M_{p,q}([\boldsymbol{y}'_i, \boldsymbol{y}'_j]|\boldsymbol{x}))]$ measures the discrepancy between model and data distribution given the difference of the pair instead of directly optimizing with only one sample like the original Eq.~\ref{eq:reverse-kl}. We can estimate $p([\boldsymbol{y}'_i, \boldsymbol{y}'_j]|\boldsymbol{x})$ with a reward distribution: $\frac{1}{Z_1}\cdot \frac{e^{R(\boldsymbol{y}'_i, \boldsymbol{y})}} { e^{R(\boldsymbol{y}'_j, \boldsymbol{y})}} $ and  $q([\boldsymbol{y}'_i, \boldsymbol{y}'_j]|\boldsymbol{x}) $ with $ \frac{1}{Z_2}\cdot \frac{q(\boldsymbol{y}'_i|\boldsymbol{x})}{q(\boldsymbol{y}'_j|\boldsymbol{x})}$ where $Z_1$ and $Z_2$ are normalizing constants. By optimizing $M_{p,q}([\boldsymbol{y}'_i, \boldsymbol{y}'_j]|\boldsymbol{x}))]$, the Eq.~\ref{eq:final-constraints} also has a similar form with Eq.~\ref{eq:reverse-kl}: 
\begin{align*}
    &\mathbb{E}_{\boldsymbol{x},\boldsymbol{y} \sim \mathcal{D}, \{\boldsymbol{y}'_k\}_{k=1}^K \sim  q(\cdot|\boldsymbol{x})}[\frac{2}{K(K-1)}\sum_{i>j}M_{p,q}([\boldsymbol{y}'_i, \boldsymbol{y}'_j]|\boldsymbol{x})] \\
    =&\mathbb{E}_{\boldsymbol{x},\boldsymbol{y} \sim \mathcal{D}, \{\boldsymbol{y}'_k\}_{k=1}^K \sim  q(\cdot|\boldsymbol{x})}[\frac{2}{K(K-1)}\sum_{i>j}\log \frac{p([\boldsymbol{y}'_i, \boldsymbol{y}'_j]|\boldsymbol{x})}{q([\boldsymbol{y}'_i, \boldsymbol{y}'_j]|\boldsymbol{x})}] \\
    \approx&\mathbb{E}_{\boldsymbol{x},\boldsymbol{y} \sim \mathcal{D}, \{\boldsymbol{y}'_k\}_{k=1}^K \sim  q(\cdot|\boldsymbol{x})}[\frac{2}{K(K-1)}\sum_{i>j}\max \{0, \\
    & \;\;\;\;\;\;\;\; -\log q(\boldsymbol{y}'_i|\boldsymbol{x})+\log q(\boldsymbol{y}'_j|\boldsymbol{x})+(i-j)\epsilon_\text{LB}\}] \\
\end{align*}

\section{Experimental Setup}\label{sec:setup}
\paragraph{Implementation} 
In order to sample meaningful positive and negative examples, we first warmup DA-Transformer with the DP loss shown in Eq.~\ref{eq:dp-loss}. In the warmup stage, we follow the default settings of the open source implementation\footnote{\url{https://github.com/thu-coai/DA-Transformer}} of DA-Transformer.  The learning rate for the machine translation task is set to $5\times10^{-4}$ but for summarization and paraphrasing, we use a smaller learning rate $1\times10^{-4}$. Excluding contrastive learning, we also apply DP training and glancing to help reduce the multi-modality of data distribution. For glancing, we linearly decrease the unmasking ratio from 0.5 to 0.1 until the training steps = 200k.
For DP training, we extend the decoder length to 8 times of the encoder length. We set the margin value $\epsilon_\text{LB}$ to 0.001. The sampling size in both  training and inference is 128 and we only keep the 25\% for training. 
We set the sampling temperature $\tau$ to 0.1 during training and $0.05$ during inference. 
The contrastive loss and DP loss are optimized with the same Adam optimizer~\cite{kingma2014adam}.   We use a batch size of 256k tokens by reducing the frequency of gradient updates to stabilize the training process. After pretraining the model for 120 epochs with DP loss, we further train the model for 5 epochs with the sum of DP loss and contrastive loss.  Our code base is built on fairseq~\cite{ott-etal-2019-fairseq} toolkit. It takes about 1 hour on 4 NVIDIA Tesla A100 GPUs for contrastive optimization on the XSum summarization dataset and 4 hours for the large-scale WMT14 En-De translation task. We assess  BLEU  on the validation set every 40 steps, and then the top 5 checkpoints are saved and averaged to produce the result on the test set. We use the base version of Transformer~\cite{transformer2017vaswani} with 6 encoders and 6 decoders as the autoregressive baseline.  The decoding speedup is tested with the batch size set to 1 following previous work on single A100 GPU and we report the average speedup of three runs. We calculate ROUGE with the pyrouge library\footnote{\url{https://github.com/bheinzerling/pyrouge}}. Both the tokenized BLEU and sacreBLEU are calculated with fairseq.
During decoding, We use the sampling algorithm to derive multiple hypotheses from the model by Algorithm~\ref{alg:inference}. We decompose the sequence probability into transition score ($ q(\mathbf{a} | \boldsymbol{x})$) and prediction score ($ q(\boldsymbol{y} | \boldsymbol{x}, \mathbf{a})$). the sequence score is given by: $\beta \cdot q(\boldsymbol{y} | \boldsymbol{x}, \mathbf{a}) + (1-\beta) \cdot q(\mathbf{a} | \boldsymbol{x}) $ where beta is a hyperparameter tuned on the validation set. We tune $\beta$ from [0.3, 0.5, 0.7]. The validation loss for the checkpoint we used for WMT14 En-De translation is 2.134, so we set $\alpha$ to 2.1 which means training samples with DP Loss larger than 2.1 will not be optimized with samples from the model distribution for stability.


\begin{figure}[ht!]
\centering
    \includegraphics[width=1.0\linewidth]{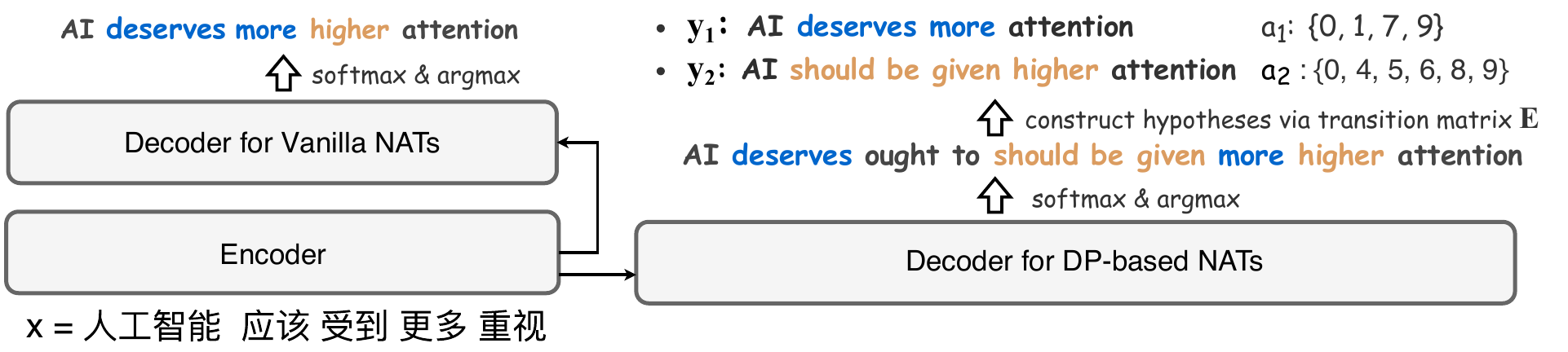}
\vspace{-0.5em}
\caption{Comparasion of Vanilla NATs and DP-based NATs. The input/output length of Vanilla NATs is usually predicted by the model, and the prediction from each position is directly taken as model output. While the decoder length of DP-based models such as CTC~\cite{ctc2018libovicky} and DAT~\cite{huang2022directed} is a hyperparameter (usually set as 2x$\sim$8x encoder length). For DAT, the decoder first predicts a word from each position, and the hypothesis is a subset of all predicted words constructed via the transition matrix. Given a hypothesis $\boldsymbol{y}_1$ from positions $\mathbf{a_1}=\{0,1,7,9\}$, $q(\mathbf{a_1} | \boldsymbol{x})$ is calculated as $\mathbf{E}_{0,1}\cdot\mathbf{E}_{1,7}\cdot\mathbf{E}_{7,9}$ and $q(\boldsymbol{y_1}|\mathbf{a_1}, \boldsymbol{x})$ is the product of the corresponding token probability $p(\texttt{`AI'}|0, \boldsymbol{x})\cdot p(\texttt{`deserves'}|1, \boldsymbol{x})\cdot p(\texttt{`more'}|7, \boldsymbol{x})\cdot p(\texttt{`attention'}|9, \boldsymbol{x})$. }
\label{fig:dp_vanilla}
\end{figure}

\section{Details of DAT Inference}\label{sec:codat}
Latest work CTC~\cite{ctc2018libovicky} and DA-Transformer~\cite{huang2022directed}  adopt a Dynamic Programming (DP) training framework significantly improving  WMT14 en-de translation benchmark by $4\sim5$ BLEU.

In this part, we present a concrete demonstration to perform sampling on DA-Transformer. 
Given the decoder length $L$ and decoder hidden states $ \mathbf{H} = \{\mathbf{h}_0,\mathbf{h}_1, \ldots, \mathbf{h}_L  \}$, we can get the token prediction distribution by: $\mathrm{softmax}(\mathbf{W}_{vocab}\mathbf{H})$ and the $L$ predicted tokens by argmax function. 

Figure~\ref{fig:dp_vanilla} is a specific case when $L=10$. Notably, the $L$ words can not be viewed as the final output in DAT. Compared with vanilla NAT, a hypothesis in DA-Transformer is represented as a small subset of the $L$ predicted tokens which means sampling  hypotheses is equivalent to combining different tokens from the model's prediction. Notably, this can be done efficiently without any additional forward passes. Sampling a subset from these predictions  usually depends on the transition matrix $\mathbf{E} \in \mathbb{R}^{L\times L}$ trained with the DP loss in Eq.~\ref{eq:dp-loss} where $\mathbf{E}_{ij}$ means the transition probability between position $i$ and $j$. Every hypothesis from DAT starts from position 0 and the next token depends on $\arg\max (\mathbf{E}_{[0,:]})$, and the generation process stops when decoding the `<eos>' token. This is the simplest version of decoding in DAT. The commonly used decoding algorithm in DAT is called \textsc{lookahead}. Details of the algorithm are shown in Algorithm~\ref{alg:lookahead}. By replacing the $\arg\max$ function with  multinomial sampling, we can get diverse output parallelly (see Algorithm~\ref{alg:inference}).

\begin{algorithm}[htbp]
\begin{footnotesize}
\caption{\footnotesize  Given decoder length $L$, transition matrix 
$\mathbf{E} \in \mathbb{R}^{L\times L}$ and the token probability vector $\mathbf{t} \in \mathbb{R}^{L}$; return decoder indexes of the output }
\label{alg:lookahead}
\begin{algorithmic}[1]
\Procedure{Lookahead}{}
    \State $\mathbf{P}$ = $\mathbf{E}$ + $\mathbf{t}$.\textsc{unsqueeze}({dim}=0) \algorithmiccomment{consider both transition probability and prediction probability }
    \State $i := 1$, \textit{output} := \textsc{Zeros} [max\_step] 
    \Repeat
        \State \textit{dist} := $\mathbf{P}$ [\textit{outputs}[$i-1$]] \algorithmiccomment{get the distribution of current step given the previous step output}
        \State \textit{output}[$i$] := \textsc{argmax}(\textit{dist})
    \Until $i$ = \textit{max\_step};
    \State \Return \textit{output}
\EndProcedure
\end{algorithmic}
\end{footnotesize}
\end{algorithm}

\begin{algorithm}[htbp]
\begin{footnotesize}
\caption{\footnotesize  Given decoder length $L$, temperature $\tau$, transition matrix 
$\mathbf{E} \in \mathbb{R}^{L\times L}$ and the token probability vector $\mathbf{t} \in \mathbb{R}^{L}$; return decoder indexes of the output }
\label{alg:inference}
\begin{algorithmic}[1]
\Procedure{SAMPLING}{}
    \State $\mathbf{t}$ = $\mathbf{t}$ / \textsc{sum}($\mathbf{t}$)  
    \State $\mathbf{P}$ = $\mathbf{E}$ + $\mathbf{t}$.\textsc{unsqueeze}({dim}=0)  
    \State $\mathbf{P}$ = \textsc{Topp\_Filter}( $\mathbf{P}$, $p=0.5$) 
    \algorithmiccomment{ skip positions with low probability }
    \State $\mathbf{P}$ = \textsc{SoftMax}( $\mathbf{P}$ / $\tau$, {dim}=1)
    \algorithmiccomment{ re-normalize after filtering}
    \State $i := 1$, \textit{output} := \textsc{Zeros} [\textit{max\_step}] 
    \Repeat
        \State \textit{dist} := $\mathbf{P}$ [\textit{outputs}[$i-1$]] \algorithmiccomment{get the distribution of current step given the previous step output}
        \State \textit{output}[$i$] := \textsc{Multinomial}(\textit{dist}, num=1)
    \Until $i$ = \textit{max\_step};
    \State \Return \textit{output}
\EndProcedure
\end{algorithmic}
\end{footnotesize}
\end{algorithm}

\section{Limitations}\label{limit}
We summarize our limitations from three aspects. Firstly, the performance increase brought by contrastive learning-based training is closely related to the ability of the backbone model. The base model should  generate a set of diverse hypotheses and contains high-quality and single-modal samples which can be measured with Oracle BLEU. This limits our contrastive training procedure from directly starting from a random initialized model distribution, and it needs the parameters of the model distribution to have been previously trained to a good parameter space that can provide high-quality samples. Considering the training samples of contrastive loss are from the model distribution instead of data distribution the improvements over strong baselines (especially for the pretrained  model) are usually more remarkable.

Secondly, \method also has impacts on the training and inference efficiency. \method needs further training for 5 epochs.  Our approach also has a minor impact on inference speed.  Since the loss helps the hypothesis with higher bleu have a higher ranking, using sampling algorithms described in Algorithm~\ref{alg:inference} during inference instead of only decoding one hypothesis usually boosts the performance. During inference, we use a sampling size of 128 and we implement the sampling algorithm with the native torch operations for simplicity. 

In practice, using the sampling described in Algorithm~\ref{alg:inference} is not feasible. Calling torch.MULTINOMIAL every step which is significantly slower than torch.ARGMAX resulting in only a 7.2x speedup ratio compared with the AT counterpart on WMT14 En-De translation task. To omit to call torch.MULTINOMIAL in the loop, we change the algorithm by  sampling \textit{max\_step} tokens at each position in parallel and directly obtain the output token instead of sampling every step.  Please refer to our updated sampling algorithm Algorithm~\ref{alg:inference_updated}.
Despite we have optimized the sampling algorithm numerous times, solely with native torch operations still cannot achieve the same decoding speed as the original DAT that decodes only a single hypothesis.

\begin{algorithm}[htbp]
\begin{footnotesize}
\caption{\footnotesize  Given decoder length $L$, temperature $\tau$, transition matrix 
$\mathbf{E} \in \mathbb{R}^{L\times L}$ and the token probability vector $\mathbf{t} \in \mathbb{R}^{L}$; return decoder indexes of the output }
\label{alg:inference_updated}
\begin{algorithmic}[1]
\Procedure{SAMPLING}{}
    \State $\mathbf{t}$ = $\mathbf{t}$ / \textsc{sum}($\mathbf{t}$)  
    \State $\mathbf{P}$ = $\mathbf{E}$ + $\mathbf{t}$.\textsc{unsqueeze}({dim}=0)  
    \State $\mathbf{P}$ = \textsc{Topp\_Filter}( $\mathbf{P}$, $p=0.5$) 
    \algorithmiccomment{ skip positions with low probability }
    \State $\mathbf{P}$ = \textsc{SoftMax}( $\mathbf{P}$ / $\tau$, {dim}=1)
    \algorithmiccomment{ re-normalize after filtering}
    \State $i := 1$, \textit{output} := \textsc{Zeros} [\textit{max\_step}] 
    \State \textit{sampled\_tokens} = \textsc{Multinomial}(P ,\textit{max\_step})
    \algorithmiccomment{ shape $L\times $ \textit{max\_step}}, sampling \textit{max\_step} tokens at each decoder position

    \Repeat
        \State \textit{output}[$i$] := \textit{sampled\_tokens}[\textit{output}[$i-1$]][$i$]
        
    \Until $i$ = \textit{max\_step};
    \State \Return \textit{output}
\EndProcedure
\end{algorithmic}
\end{footnotesize}
\end{algorithm}

\end{document}